\newtheorem{theorem}{Theorem}
\newtheorem{lemma}{Lemma}
\newtheorem{remark}{Remark}
\newenvironment{proof}[1][]{Proof}{}
\begin{document}

\author{Milad Khaledyan$^{1}$,~\IEEEmembership{Member,~IEEE,} Tairan
Liu$^{2}$, and Marcio de Queiroz$^{2}$,~%
\IEEEmembership{Senior Member,~IEEE\thanks{$^{1}$M. Khaledyan is with the Department of Electrical \& Computer Engineering, University of New Mexico, Albuquerque, NM 87131 USA (milad@unm.edu).}\thanks{$^{2}$T. Liu and M. de Queiroz are with the Department of Mechanical \& Industrial
        		Engineering, Louisiana State University, Baton Rouge, LA 70803 USA (tliu7@lsu.edu, mdeque1@lsu.edu).}}
}
\title{Flocking and Target Interception Control for Formations of
Nonholonomic Kinematic Agents}
\maketitle

\begin{abstract}
In this work, we present solutions to the flocking and target interception
problems of multiple nonholonomic unicycle-type robots using the
distance-based framework. The control laws are designed at the kinematic
level and are based on the rigidity properties of the graph modeling the
sensing/communication interactions among the robots. An input transformation
is used to facilitate the control design by converting the nonholonomic
model into the single integrator-like equation. We assume only a subset of
the robots know the desired, time-varying flocking velocity or the target's
motion. The resulting control schemes include distributed, variable
structure observers to estimate the unknown signals. Our stability analyses
prove convergence to the desired formation while tracking the flocking
velocity or the target motion. The results are supported by experiments.
\end{abstract}

\begin{IEEEkeywords}
Multi-agent systems, formation control, flocking, target interception, nonholonomic systems.
\end{IEEEkeywords}

\IEEEpeerreviewmaketitle

\section{Introduction}

The field of decentralized control of multi-agent systems is an ongoing
topic of interest to control and robotics researchers. Formation control is
a type of coordinated behavior where mobile agents are required to
autonomously converge to a specified spatial pattern. Many
coordinated/cooperative tasks, such are element tracking, exploration, and
object transportation, also require the formation to maneuver as a virtual
rigid body. Such maneuvers can include translation, rotation, or the
combination of both. When only the translational component is considered,
the problem is often referred to as flocking. A related problem is called
target interception where the agents intercept and surround a moving target
with a given formation.

Formation control algorithms have been designed for different models of the
agent motion. Most results are based on point-mass type models, such as the
single and double integrator models. For example, see \cite%
{Dorfler,Krick,ZhangJDSMC} for single integrator results and \cite%
{CaiJDSMC,Cao,Sun 2017} for double integrator results. On the other hand,
some results have used more sophisticated models that account for the agent
kinematics/dynamics. One of two models are used in these cases: the
fully-actuated (holonomic) Euler-Lagrange model, which includes robot
manipulators, spacecraft, and some omnidirectional mobile robots; or the
nonholonomic (underactuated) model, which accounts for velocity constraints
that typically occur in the vehicle motion (e.g., differentially-driven
wheeled mobile robots and air vehicles). In the nonholonomic case, models
can be further subdivided into two categories: the purely kinematic model
where the control inputs are at the velocity level, and the dynamic model
where the inputs are at the actuator level. Examples of work based on the
Euler-Lagrange model include \cite{CaiTCST,Chen,Chung,Lee,Pereira}.
Formation control results based on nonholonomic kinematic models can be
found in \cite{Baillieul03,Mastellone,Moshtagh,Sadowska}. Designs for
nonholonomic dynamic models appeared in \cite%
{ChenIJRR,Dong08,Dong09,GaziJDSMC,Liang}.

Flocking and target interception controllers were introduced in \cite%
{CaiAJC,CaiACC14} for the single- and double-integrator models using the
distance-based, rigid graph approach from \cite{Krick} where the
time-varying flocking velocity was available to all agents. A 2D formation
maneuvering controller was proposed in \cite{Bai} for the double-integrator
model where the group leader, who has inertial frame information, passes the
information to other agents through a directed path in the graph. A
limitation of this control is that it becomes unbounded if the desired
formation maneuvering velocity is zero. In \cite{KhaledyanR}, a
leader-follower type solution modeled as a spanning tree was presented for
the formation maneuvering problem based on the nonholonomic kinematics of
unicycle robots. A combination of tracking errors and inter-agent
coordination errors were used to quantify the control objective. A consensus
scheme was presented in \cite{Han} using both the single- and
double-integrator models where the desired flocking velocity is constant and
known to only two leader agents. In \cite{Rozenheck}, the flocking strategy
involved a leader with a constant velocity command and followers who track
the leader while maintaining the formation shape. The control law, which was
based on the single-integrator model, consisted of the standard gradient
descent formation acquisition term plus an integral term to ensure zero
steady-state error with respect to the velocity command. In \cite{Sun 2016},
a flocking controller was designed for agents modeled by double integrators
that allows all agents to both achieve the same velocity and reach a desired
formation in finite time. A similar problem was addressed in \cite{Deghat}
but with asymptotic formation acquisition and velocity consensus. In \cite%
{Marina}, a controller was proposed using the single-integrator model that
can steer the entire formation in rotation and/or translation in 3D. The
rotation component was specified relative to a body-fixed frame whose origin
is at the centroid of the desired formation and needs to be known. In \cite%
{Shi}, the authors study the flocking behavior of multiple vehicles with a
dynamic leader with known acceleration available to all agents. A flocking
controller was designed in \cite{Sun 2016} for double integrator agents that
ensures finite time convergence for the flocking of desired formation with
flocking velocity equal to the average of the agents' initial velocity.
Recently, \cite{Yang} introduced a distance-based, flocking-type controller
where the formation centroid tracks a reference trajectory using a
finite-time centroid observer.

A special case of flocking, called consensus tracking, where the agents
simply have to track the motion of a leader without being in formation was
addressed in \cite{Cao TAC 12,Hong,Mei}. In \cite{Hong}, only the
single-integrator agents connected to the leader had access to its position
and a decentralized, linear observer was designed to estimate the leader's
time-varying velocity. However, exact tracking of the leader motion was only
assured when the leader acceleration was known by all agents. In \cite{Mei},
Euler-Lagrange agents with parametric uncertainty were considered in the
design of two consensus tracking algorithms. In the first design, the leader
velocity was constant and an adaptive controller combined with a
distributed, linear velocity observer were developed. The second design
assumed the leader velocity is time-varying which leads to the formulation
of a variable structure-type control law using one- and two-hop neighbor
information. In \cite{Cao TAC 12}, the authors studied single and double
integrator agents in fixed and switching network topologies with constant
and time-varying leader velocity. Distributed variable structure consensus
tracking controllers were designed without velocity (resp., acceleration)
measurements for the single (resp., double) integrator case.

A popular formation control approach is to use the inter-agent distances as
the controlled variables. This approach is intrinsically related to rigid
graph theory \cite{Anderson08} since the concept of graph rigidity naturally
ensures that the inter-agent distance constraints of the desired formation
are enforced. The distance-based control framework has been mostly applied
to the single and double integrator agent models. To the best of our
knowledge, the only exceptions are the results in \cite{DimaDistance,Sun
2018}. In \cite{DimaDistance}, the authors considered the nonholonomic
kinematic model in the design of a formation acquisition controller. The
work in \cite{Sun 2018} studied the circular formation control of
nonholonomic kinematic agents with fixed (but distinct) cruising speeds.

In this paper, we apply the distance-based approach to the flocking and
target interception of nonholonomic kinematic agents in the form of
unicycle-type vehicles. In the flocking problem, we assume the desired,
time-varying flocking velocity is known by only a subset of the agents. In
the target interception problem, only the leader agent has the target
information. We use an input transformation to convert the nonholonomic
multi-agent system into a single integrator-like system that includes a
multiplicative matrix dependent on the vehicle heading angle error. This
transformation enables us to use the gradient descent law from \cite{Krick}
for formation acquisition augmented with a flocking or target interception
term. The flocking term for each agent is a flocking velocity estimate
generated by a distributed, variable structure observer using only neighbor
information, which was inspired by the consensus algorithms in \cite{Cao TAC
12,Mei}. The target interception term for the followers is composed of two
estimates$-$one for the target velocity and one for the relative position of
the target to the leader$-$which are also updated by distributed, variable
structure observers. For both problems, the overall closed-loop system is
composed of multiple coupled nonlinear subsystems. Thus, the stability of
the proposed observer-controller system is analyzed using input-to-state
stability and interconnected system theory. Our analyses show that the error
dynamics are asymptotically stable at the origin for both problems, meaning
that the flocking and target interception objectives are successively met.
The main contribution of this paper is that it is the first to apply the
distance-based framework to nonholonomic kinematic agents for the flocking
and target interception problems. A preliminary version of this work
appeared in \cite{KhaledyanACC} where the flocking velocity was assumed
known to all agents.

\section{Background Material}

\label{Preliminaries}

An undirected graph $G$ is a pair $(V,E)$ where $V=\{1,2,...,n\}$ is the set
of nodes and $E\subset V\times V$ is the set of undirected edges that
connect two different nodes, i.e., if node pair $(i,j)\in E$ then so is $%
(j,i)$. We let $a\in \left\{ 1,\ldots ,n(n-1)/2\right\} $ denote the total
number of edges in $E$. \ The set of neighbors of node $i$ is denoted by 
\begin{equation}
\mathcal{N}_{i}(E)=\{j\in V\mid (i,j)\in E\}.  \label{Ni}
\end{equation}

Let $A=[a_{ij}]\in 
\mathbb{R}
^{n\times n}$ be the adjacency matrix defined such that $a_{ij}=1$ if $%
(i,j)\in E$ and $a_{ij}=0$ otherwise. Note that $a_{ij}=a_{ji}$. The
Laplacian matrix $L=[l_{ij}]\in 
\mathbb{R}
^{n\times n}$ associated with $A$ is defined such that $l_{ii}=\sum_{j=1,j%
\neq i}^{n}a_{ij}$ and $l_{ij}=-a_{ij}$ for $i\neq j$. Note that $L$ is
symmetric positive definite, and has a simple zero eigenvalue with an
associated eigenvector $1_{n}$ where $1_{n}$ is the $n\times 1$ vector of
ones \cite{Chung-graph}.

If $p_{i}\in 
\mathbb{R}
^{2}$ is the coordinate of node $i$, then a\textit{\ }framework $F$ is
defined as the pair $\left( G,p\right) $ where $p=\left[ p_{1},\ldots ,p_{2}%
\right] \in 
\mathbb{R}
^{2n}$. In the following, we assume all frameworks have \textit{generic }%
properties, i.e., the properties hold for almost all of the framework
representations. This is done to exclude certain degenerate configurations
such as frameworks that lie in a hyperplane (see \cite{Graver} for a
detailed study of generic frameworks).

Based on an arbitrary ordering of edges, the edge function\textit{\ }$\phi :$
$%
\mathbb{R}
^{2n}\rightarrow 
\mathbb{R}
^{a}$ is given by%
\begin{equation}
\phi (p)=\left[ ...,\left\Vert p_{i}-p_{j}\right\Vert ^{2},...\right] ,\text{
\ \ }(i,j)\in E  \label{edge function}
\end{equation}%
such that its $k$th component, $\left\Vert p_{i}-p_{j}\right\Vert ^{2}$,
relates to the $k$th edge of $E$ connecting the $i$th and $j$th nodes. The
rigidity matrix\textit{\ }$R:%
\mathbb{R}
^{2n}\rightarrow 
\mathbb{R}
^{a\times 2n}$ is given by 
\begin{equation}
R(p)=\frac{1}{2}\frac{\partial \phi (p)}{\partial p}  \label{R}
\end{equation}%
where rank$\left[ R(p)\right] \leq 2n-3$ \cite{Asimow79}. Notice that the $k$%
th row of $R$ has the form 
\begin{equation}
\left[ 0\text{ }...\text{ }0\text{ }\left( p_{i}-p_{j}\right) ^{\top }\text{ 
}0\text{ }...\text{ }0\text{ }\left( p_{j}-p_{i}\right) ^{\top }\text{ }0%
\text{ }\ldots \text{ }0\right]  \label{R row}
\end{equation}%
where $\left( p_{i}-p_{j}\right) ^{\top }$ is in columns $2i-1$ and $2i$, $%
\left( p_{j}-p_{i}\right) ^{\top }$ is in columns $2j-1$ and $2j$, and all
other elements are zero.

An isometry of $%
\mathbb{R}
^{2}$ is a bijective map $\mathcal{T}:\mathbb{\ 
\mathbb{R}
}^{2}\rightarrow \mathbb{\ 
\mathbb{R}
}^{2}$ satisfying \cite{Izmestiev} 
\begin{equation}
\left\Vert w-v\right\Vert =\left\Vert \mathcal{T}\left( w\right) -\mathcal{T}%
\left( v\right) \right\Vert ,\quad \forall w,v\in 
\mathbb{R}
^{2}.  \label{iso}
\end{equation}%
This map includes rotations and translations of the\textit{\ }vector $w-v$.
Two frameworks are said to be \textit{isomorphic} in $%
\mathbb{R}
^{2}$ if they are related by an isometry. In this paper, we will represent
the collection of all frameworks that are isomorphic to $F$ by Iso$\left(
F\right) $. It is important to point out that (\ref{edge function}) is
invariant under isomorphic motions of the framework.

Frameworks $\left( G,p\right) $ and $\left( G,\hat{p}\right) $ are
equivalent if $\phi (p)=\phi (\hat{p})$, and are congruent if $\left\Vert
p_{i}-p_{j}\right\Vert =\left\Vert \hat{p}_{i}-\hat{p}_{j}\right\Vert $, $%
\forall i,j\in V$ \cite{Jackson07}. The necessary and sufficient condition
for a generic framework $\left( G,p\right) $ to be infinitesimally rigid is
rank$\left[ R\left( p\right) \right] =2n-3$ \cite{Izmestiev}. An
infinitesimally rigid framework is minimally rigid\ if and only if $a=2n-3$ 
\cite{Anderson08}. If the infinitesimally rigid frameworks $(G,p)$ and $(G,%
\hat{p})$ are equivalent but not congruent, then they are referred to as 
\textit{ambiguous }\cite{Anderson08}. The notation Amb$\left( F\right) $
will be used to represent the collection of all frameworks that are
ambiguous to the infinitesimally rigid framework $F$. All frameworks in Amb$%
\left( F\right) $\ are also assumed to be infinitesimally rigid. According
to \cite{Anderson08} and Theorem 3 of \cite{Aspnes}, this assumption holds
almost everywhere.

\begin{lemma}
\label{LemmaR1n}\cite{CaiAJC} For any $x\in 
\mathbb{R}
^{2}$, $R(p)(1_{n}\otimes x)=0$ where $1_{n}$ is the $n\times 1$ vector of
ones.
\end{lemma}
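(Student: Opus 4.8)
The plan is to prove the identity row-by-row, exploiting the explicit structure of the rigidity matrix recorded in (\ref{R row}). First I would observe that the Kronecker product $1_{n}\otimes x$ is simply the vector in $\mathbb{R}^{2n}$ obtained by stacking $n$ copies of $x$; concretely, its entries in positions $2i-1$ and $2i$ equal the two components of $x$ for every node $i\in V$. This is the only fact about the Kronecker product I need.

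Next I would fix an arbitrary edge index $k\in\{1,\ldots,a\}$, connecting nodes $i$ and $j$, and compute the $k$th component of $R(p)(1_{n}\otimes x)$. According to (\ref{R row}), the only nonzero blocks in that row are $(p_{i}-p_{j})^{\top}$ in columns $2i-1,2i$ and $(p_{j}-p_{i})^{\top}$ in columns $2j-1,2j$, with all remaining entries zero. Since the matching blocks of $1_{n}\otimes x$ are both equal to $x$, the $k$th component collapses to the scalar $(p_{i}-p_{j})^{\top}x+(p_{j}-p_{i})^{\top}x$.

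I would then note that this sum vanishes identically, because $(p_{j}-p_{i})^{\top}=-(p_{i}-p_{j})^{\top}$, so the two terms cancel. As $k$ was arbitrary, every component of $R(p)(1_{n}\otimes x)$ is zero, which establishes the claim.

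The computation is elementary and I do not anticipate a genuine obstacle; the only point requiring care is the index bookkeeping, ensuring that the stacked copy of $x$ multiplies the correct $2$-column block for each endpoint, after which the antisymmetry $(p_{i}-p_{j})^{\top}=-(p_{j}-p_{i})^{\top}$ forces the cancellation independently of the chosen edge ordering. It is worth recording the geometric meaning behind the identity: $1_{n}\otimes x$ represents an infinitesimal rigid translation of the entire framework by the common vector $x$, which necessarily preserves all inter-agent distances and must therefore lie in the kernel of $R(p)$.
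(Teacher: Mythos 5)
Your proof is correct and complete: the row-by-row computation using the block structure in (\ref{R row}), together with the observation that $1_{n}\otimes x$ places the same copy of $x$ in every coordinate pair, immediately yields the cancellation $(p_{i}-p_{j})^{\top}x+(p_{j}-p_{i})^{\top}x=0$ for each edge. The paper itself does not prove this lemma but only cites \cite{CaiAJC}, so there is no in-paper argument to compare against; your argument is the standard one, and your closing remark that $1_{n}\otimes x$ is an infinitesimal rigid translation correctly captures the geometric content.
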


\begin{theorem}
\label{Khalil_Thm}\cite{Khalil} Consider the system $\dot{x}=f\left(
x,u\right) $ where $x$ is the state, $u$ is the control input, and $f(x,u)$
is locally Lipschitz in $(x,u)$ in some neighborhood of $(x=0,u=0)$. Then,
the system is locally input-to-state stable (ISS) if and only if the
unforced system $\dot{x}=f(x,0)$ has a locally asymptotically stable
equilibrium point at the origin.
\end{theorem}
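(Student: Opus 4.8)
The plan is to prove the two implications of the biconditional separately. The forward direction (ISS $\Rightarrow$ asymptotic stability of the unforced system) is essentially immediate: local ISS supplies an estimate of the form $\|x(t)\| \le \beta(\|x(0)\|, t) + \gamma\big(\sup_{\tau \in [0,t]} \|u(\tau)\|\big)$ with $\beta$ of class $\mathcal{KL}$ and $\gamma$ of class $\mathcal{K}$, valid for sufficiently small initial states and inputs. Setting $u \equiv 0$ (which is admissible since it lies in any neighborhood of the origin in the input space) collapses this bound to $\|x(t)\| \le \beta(\|x(0)\|, t)$, which is precisely the $\mathcal{KL}$ characterization of local asymptotic stability of the origin for $\dot{x} = f(x,0)$.

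The substantive direction is the converse, and the plan is to construct a local ISS-Lyapunov function. First I would invoke a converse Lyapunov theorem: since $f$ is locally Lipschitz and the origin is a locally asymptotically stable equilibrium of $\dot{x} = f(x,0)$, there exist a continuously differentiable $V$ on a neighborhood of the origin and class-$\mathcal{K}$ functions $\alpha_1, \ldots, \alpha_4$ such that $\alpha_1(\|x\|) \le V(x) \le \alpha_2(\|x\|)$, $\frac{\partial V}{\partial x} f(x,0) \le -\alpha_3(\|x\|)$, and $\big\|\frac{\partial V}{\partial x}\big\| \le \alpha_4(\|x\|)$.

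Next I would differentiate $V$ along the forced dynamics and split the result as $\dot{V} = \frac{\partial V}{\partial x} f(x,0) + \frac{\partial V}{\partial x}\big[f(x,u) - f(x,0)\big]$. Using the local Lipschitz property of $f$ in $u$ to bound $\|f(x,u) - f(x,0)\| \le L\|u\|$ on the relevant neighborhood, the gradient bound yields $\dot{V} \le -\alpha_3(\|x\|) + L\,\alpha_4(\|x\|)\,\|u\|$. Fixing a constant $\theta \in (0,1)$, I would then argue that $\dot{V} \le -(1-\theta)\,\alpha_3(\|x\|)$ whenever $\|x\| \ge \rho(\|u\|)$, where $\rho$ is a class-$\mathcal{K}$ function built from $\alpha_3$, $\alpha_4$, and $L$. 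This is exactly the ISS-Lyapunov inequality, and local ISS then follows from the standard ISS-Lyapunov theorem.

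The main obstacle I anticipate is not any single estimate but the bookkeeping required to keep everything local and mutually consistent: the converse Lyapunov function, the Lipschitz bound on $f$, and the gain construction each hold only on their own neighborhoods of the origin in $(x,u)$, so I would need to intersect these domains and restrict the admissible input magnitudes so that the trajectory stays in the region where all the inequalities remain valid. The gradient bound $\big\|\frac{\partial V}{\partial x}\big\| \le \alpha_4(\|x\|)$ from the converse theorem is the technical linchpin, since it is what lets the input-perturbation term be dominated by $\alpha_3$ for small states.
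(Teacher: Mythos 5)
The paper gives no proof of this statement; it is quoted as background from the cited reference (it is Lemma~4.6 in Khalil's \emph{Nonlinear Systems}), so there is nothing internal to compare against. Your proposal correctly reproduces the standard argument from that reference---the $\mathcal{KL}$ bound with $u\equiv 0$ for necessity, and the converse Lyapunov theorem plus the Lipschitz perturbation bound $\|f(x,u)-f(x,0)\|\le L\|u\|$ and the gradient estimate to build a local ISS-Lyapunov function for sufficiency---and your closing remark about intersecting the various neighborhoods (in particular, bounding $\alpha_4$ by a constant locally so that $\rho$ is well defined and of class $\mathcal{K}$) is exactly the bookkeeping the textbook proof carries out.
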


\begin{theorem}
\label{Marquez}\cite{Khalil} Consider the interconnected system 
\begin{equation}
\begin{array}{ll}
\Sigma _{1}\text{:} & \dot{x}=f(t,x,y) \\ 
\Sigma _{2}\text{:} & \dot{y}=g(t,y).%
\end{array}
\label{interconn}
\end{equation}%
if subsystem $\Sigma _{1}$ with input $y$ is ISS and $y=0$ is a uniformly
asymptotically stable equilibrium point of subsystem $\Sigma _{2}$, then $%
[x,y]=0$ is a uniformly asymptotically stable equilibrium point of the
interconnected system.
\end{theorem}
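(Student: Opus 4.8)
The plan is to exploit the cascade (triangular) structure of \eqref{interconn}: subsystem $\Sigma_{2}$ evolves autonomously in $y$, independent of $x$, so its decay can be characterized first and then treated as a vanishing disturbance driving $\Sigma_{1}$. First I would restate each hypothesis as a comparison estimate. Uniform asymptotic stability of the origin of $\Sigma_{2}$ yields a class-$\mathcal{KL}$ function $\beta_{2}$ with
\begin{equation}
\left\Vert y(t)\right\Vert \leq \beta_{2}\!\left( \left\Vert y(t_{0})\right\Vert ,\,t-t_{0}\right) ,\quad \forall t\geq t_{0}, \label{ybound}
\end{equation}
while ISS of $\Sigma_{1}$ with input $y$ yields a class-$\mathcal{KL}$ function $\beta_{1}$ and a class-$\mathcal{K}$ function $\gamma$ with
\begin{equation}
\left\Vert x(t)\right\Vert \leq \beta_{1}\!\left( \left\Vert x(t_{0})\right\Vert ,\,t-t_{0}\right) + \gamma\!\left( \sup_{t_{0}\leq \tau \leq t}\left\Vert y(\tau)\right\Vert \right) . \label{xbound}
\end{equation}

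Next I would establish \emph{uniform stability} of the joint origin $[x,y]=0$. Since $\beta_{2}$ is nonincreasing in its second argument, \eqref{ybound} gives $\sup_{\tau \geq t_{0}}\left\Vert y(\tau)\right\Vert \leq \beta_{2}(\left\Vert y(t_{0})\right\Vert ,0)=:\alpha(\left\Vert y(t_{0})\right\Vert)$ with $\alpha$ of class $\mathcal{K}$. Substituting this into \eqref{xbound} and combining with \eqref{ybound} bounds $\left\Vert [x(t),y(t)]\right\Vert$, for all $t\geq t_{0}$, by a class-$\mathcal{K}$ function of $\left\Vert [x(t_{0}),y(t_{0})]\right\Vert$ that is independent of $t_{0}$; this is precisely uniform stability.

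The main effort, and the step I expect to be the chief obstacle, is \emph{uniform attractivity}, because the supremum in \eqref{xbound} retains the memory of the initially large values of $y$ and hence does not vanish as $t\to \infty$ on its own. To circumvent this I would invoke the restarting property of the ISS estimate (reinitializing at an intermediate time): for any $t>t_{0}$, reapply \eqref{xbound} with initial time $s:=(t_{0}+t)/2$ to obtain
\begin{equation}
\left\Vert x(t)\right\Vert \leq \beta_{1}\!\left( \left\Vert x(s)\right\Vert ,\,t-s\right) + \gamma\!\left( \sup_{s\leq \tau \leq t}\left\Vert y(\tau)\right\Vert \right) . \label{restart}
\end{equation}
The first term tends to zero as $t\to \infty$ because $t-s=(t-t_{0})/2\to \infty$ while $\left\Vert x(s)\right\Vert$ remains bounded by the uniform stability estimate; the second term tends to zero because $s\to \infty$ forces $\sup_{\tau \geq s}\left\Vert y(\tau)\right\Vert \leq \beta_{2}(\left\Vert y(t_{0})\right\Vert ,s-t_{0})\to 0$ and $\gamma$ is continuous with $\gamma(0)=0$. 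Since every bound depends on the initial data only through $\left\Vert [x(t_{0}),y(t_{0})]\right\Vert$ and the convergence time can be chosen independently of $t_{0}$, attractivity is uniform. Combined with uniform stability, this establishes uniform asymptotic stability of $[x,y]=0$, completing the argument.
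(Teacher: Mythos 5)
Your argument is correct and is essentially the proof of this result found in the cited reference (Khalil's Lemma 4.7), which the paper itself does not reproduce: uniform stability from composing the $\mathcal{KL}$ and ISS estimates, and uniform attractivity via re-initializing the ISS bound at the midpoint $s=(t_{0}+t)/2$ so that both the transient term and the $\gamma(\sup\left\Vert y\right\Vert )$ term vanish uniformly. The only point worth flagging is that, since the paper's ISS notion (Theorem \ref{Khalil_Thm}) is local, your estimates hold only for initial data in a suitable neighborhood of the origin, which should be stated explicitly.
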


For any piecewise continuous signal $x:\mathbb{R}_{\geq 0}\rightarrow 
\mathbb{R}^{n}$, 
\begin{equation}
\left\Vert x\right\Vert _{\mathcal{L}_{\infty }}:=\sup\limits_{t\geq
0}\left\Vert x(t)\right\Vert .  \label{inf norm}
\end{equation}%
If $\left\Vert x\right\Vert _{\mathcal{L}_{\infty }}<\infty $ (the signal is
bounded for all time), we say that $x(t)\in \mathcal{L}_{\infty }$.

Finally, for any $x\in 
\mathbb{R}
^{n}$, sgn$\left( x\right) :=[sgn\left( x_{1}\right) ,\ldots ,$ $sgn\left(
x_{n}\right) ]$ where $sgn(\cdot )$ is the standard signum function: 
\begin{equation}
sgn\left( x_{i}\right) =\left\{ 
\begin{array}{ll}
1 & \text{if }x_{i}>0 \\ 
0 & \text{if }x_{i}=0 \\ 
-1 & \text{if }x_{i}<0.%
\end{array}%
\right.  \label{sgn}
\end{equation}

\section{System Model}

\label{System Model}

Consider a system of $n$ agents moving autonomously on the plane. Figure \ref%
{vehicle} depicts the $i$th agent, where the reference frame $\left\{
X_{0},Y_{0}\right\} $ is an inertial frame. The body reference frame $%
\left\{ X_{i},Y_{i}\right\} $ is attached to the $i$th vehicle with the $%
X_{i}$ axis aligned with its heading (longitudinal) direction, which is
given by angle $\theta _{i}$ and measured counterclockwise from the $X_{0}$
axis. Point $C_{i}$ denotes the $i$th vehicle's center of mass which is
assumed to coincide with its center of rotation.
\begin{figure}[!t]
\centering
\includegraphics[keepaspectratio,width=1.8733in,height=1.6214in]{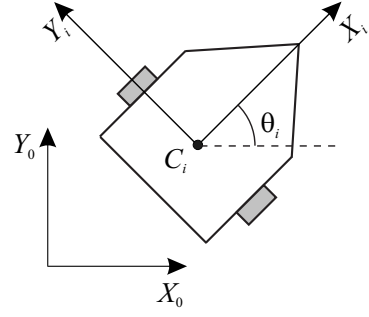}
\caption{Schematic of the unicycle agent.}
\label{vehicle}
\end{figure}
We assume the agent motion is
governed by the following nonholonomic, unicycle kinematic model 
\begin{equation}
\dot{q}_{i}=S(\theta _{i})\eta _{i},\quad i=1,...,n.  \label{dynamic1}
\end{equation}%
\newline
In (\ref{dynamic1}), $q_{i}=\left[ x_{i},y_{i},\theta _{i}\right] $ denotes
the position and orientation of $\{X_{i},Y_{i}\}$ relative to $%
\{X_{0},Y_{0}\}$, $\eta _{i}=\left[ v_{i},\omega _{i}\right] $ is the
control input, $v_{i}$ is the $i$th agent's translational speed in the
direction of $\theta _{i}$, $\omega _{i}$ is the $i$th agent's angular speed
about the vertical axis passing through $C_{i}$, and%
\begin{equation}
S(\theta _{i})=%
\begin{bmatrix}
\cos \theta _{i} & 0 \\ 
\sin \theta _{i} & 0 \\ 
0 & 1%
\end{bmatrix}%
.  \label{S}
\end{equation}

\section{Problem Statement}

\label{Problem Statement}

Consider that the agents' target formation is modeled by the framework $%
F^{\ast }=(G^{\ast },p^{\ast })$ where $G^{\ast }=\left( V^{\ast },E^{\ast
}\right) $, dim$(V^{\ast })=n$, dim$(E^{\ast })=a$, $p^{\ast }=\left[
p_{1}^{\ast },\ldots ,p_{n}^{\ast }\right] $, and $p_{i}^{\ast }=\left[
x_{i}^{\ast },y_{i}^{\ast }\right] $. The target distance separating the $i$%
th and $j$th agents is given by \ \ \ \ 
\begin{equation}
d_{ij}=\left\Vert p_{i}^{\ast }-p_{j}^{\ast }\right\Vert >0,\quad i,j\in
V^{\ast }.  \label{dij}
\end{equation}%
We assume $F^{\ast }$ is constructed to be infinitesimally and minimally
rigid.

The actual formation of the agents is encoded by the framework $F(t)=\left(
G^{\ast },p(t)\right) $ where $p=\left[ p_{1},\ldots ,p_{n}\right] $ and $%
p_{i}=\left[ x_{i},y_{i}\right] $. We make the following assumptions about
the agents.

\begin{enumerate}
\item[A1.] Agent $i$ can measure the relative position of agent $j$, $%
p_{i}-p_{j}$, $\forall j\in \mathcal{N}_{i}(E^{\ast })$ with respect to
frame $\left\{ X_{i},Y_{i}\right\} $.

\item[A2.] Agent $i$ can measure the relative heading angle of agent $j$, $%
\theta _{i}-\theta _{j}$, $\forall j\in \mathcal{N}_{i}(E^{\ast })$.%
\footnote{%
An alternative to A2 is to assume that each agent is equipped with a compass
to measure its own heading angle $\theta _{i}$.}

\item[A3.] Agent $i$ has a communication channel with agent $j$, $\forall
j\in \mathcal{N}_{i}(E^{\ast })$.
\end{enumerate}

We will address the following two formation problems.

\textit{Flocking Problem:} In this problem, the agents need to acquire and
maintain a pre-defined geometric shape in the plane while simultaneously
moving with a desired translational velocity that is known by only a subset
of the agents. That is, 
\begin{equation}
F(t)\rightarrow \text{Iso}\left( F^{\ast }\right) \text{ as }t\rightarrow
\infty ,  \label{control_obj_form}
\end{equation}%
which is equivalent to 
\begin{equation}
\left\Vert p_{i}(t)-p_{j}(t)\right\Vert \rightarrow d_{ij}\text{ as }%
t\rightarrow \infty ,\quad i,j\in V^{\ast }  \label{control_obj_form2}
\end{equation}%
due to the framework rigidity, and 
\begin{equation}
\dot{p}_{i}(t)-v_{0}(t)\rightarrow 0\text{ as }t\rightarrow \infty ,\quad
i=1,...,n  \label{2nd_Obj_Maintenance}
\end{equation}%
where $v_{0}\in 
\mathbb{R}
^{2}$ is any continuously differentiable function of time representing the
desired flocking velocity. We assume $v_{0}(t),\dot{v}_{0}(t)\in \mathcal{L}%
_{\infty }$ where $\left\Vert \dot{v}_{0}(t)\right\Vert _{\mathcal{L}%
_{\infty }}\leq \gamma _{0}$ and $\gamma _{0}$ is a known positive constant.
The nonempty subset of agents that have direct access to $v_{0}$ is denoted
by $V_{0}\subset V^{\ast }$.

\textit{Target Interception Problem}: Here, the agents should intercept and
enclose a (possibly evading) moving target with a pre-defined formation. The
acquisition of the pre-defined formation is quantified by (\ref%
{control_obj_form}). Let $p_{T}\in 
\mathbb{R}
^{2}$ denote the target position, which is assumed to be a twice
continuously differentiable function of time such that $p_{T}(t),\dot{p}%
_{T}(t),\ddot{p}_{T}(t)\in \mathcal{L}_{\infty }$ where $\left\Vert \ddot{p}%
_{T}(t)\right\Vert _{\mathcal{L}_{\infty }}\leq \gamma _{T1}$ and $\gamma
_{T1}\ $is a known positive constant. In order to intercept the target, we
use a leader-follower-like scheme where the $n$th agent is the leader and
the remaining agents are followers. The leader is responsible for tracking
the target while the followers flock and maintain the desired formation.
Thus, the leader is the only agent that can directly measure its relative
position to the target, $p_{T}-p_{n}$, and the target velocity, $\dot{p}_{T}$%
. The desired formation $F^{\ast }$ should be selected with the additional
condition that $p_{n}^{\ast }\in $ conv$\left\{ p_{1}^{\ast
},...,p_{n-1}^{\ast }\right\} $ where conv$\left\{ \cdot \right\} $ denotes
the convex hull. The main objective for this problem is that $p_{T}(t)$
approach conv$\left\{ p_{1}(t),...,p_{n-1}(t)\right\} $ as time evolves,
i.e.,%
\begin{equation}
p_{T}(t)\in \text{conv}\{p_{1}(t),\text{ }\ldots ,\text{ }p_{n-1}(t)\}\text{
as }t\rightarrow \infty .  \label{2nd_obj_target}
\end{equation}

\section{Flocking Control}

\label{KLC}

We begin by introducing several error variables. The relative position of
agents $i$ and $j$ is defined as 
\begin{equation}
{p}_{ij}=p_{i}-p_{j}  \label{qtilda}
\end{equation}%
while the corresponding distance error is captured by the variable \cite%
{Krick} 
\begin{equation}
z_{ij}=\left\Vert {p}_{ij}\right\Vert ^{2}-d_{ij}^{2}.  \label{zij}
\end{equation}%
The vector of all $z_{ij}$ for which $\left( i,j\right) \in E^{\ast }$ is
defined as $z=\left[ ...,z_{ij},...\right] \in 
\mathbb{R}
^{a}$, which is ordered as (\ref{edge function}). Given that $\left\Vert {p}%
_{ij}\right\Vert \geq 0$, note that $z_{ij}=0$ if and only if $\left\Vert {p}%
_{ij}\right\Vert =d_{ij}$. This means that when $z=0$, the frameworks $F$
and $F^{\ast }$ are equivalent and therefore, $F=$ Iso$\left( F^{\ast
}\right) $ or $F=$ Amb$\left( F^{\ast }\right) $. Next, let 
\begin{equation}
\tilde{\theta}_{i}=\theta _{i}-\theta _{id}  \label{theta tilda}
\end{equation}%
where $\theta _{id}$ denotes the desired heading direction, which is to be
specified later. Finally, since the flocking velocity $v_{0}$ is not known
by all agents, $\hat{v}_{fi}\in 
\mathbb{R}
^{2}$ will denote the flocking velocity estimate for agent $i$ and 
\begin{equation}
\tilde{v}_{fi}=\hat{v}_{fi}-v_{0}  \label{vftilda}
\end{equation}%
is the corresponding flocking velocity estimation error.

Before presenting the flocking control scheme, we state a useful lemma.

\begin{lemma}
\label{Lemma stability}\footnote{%
The proof of this lemma is omitted since it is directly based on the proof
of Theorem 1 in \cite{CaiAJC}.}\cite{CaiAJC} Consider the system 
\begin{equation}
\dot{z}=-\alpha R(p)R^{\intercal }(p)z,  \label{zdot lemma}
\end{equation}%
where $\alpha $ is a positive constant and $R(p)$, which was defined in (\ref%
{R}), has full row rank. Given the sets 
\begin{equation}
\begin{array}{l}
\Omega _{1}=\left\{ z:\Lambda (F,F^{\ast })\leq \delta \right\} \\ 
\\ 
\Omega _{2}=\left\{ z:\text{dist}(p,\text{Iso}(F))<\text{dist}(p,\text{Amb}%
(F^{\ast }))\right\}%
\end{array}
\label{Omega12}
\end{equation}%
where $\delta $ is a sufficiently small positive constant and dist$(\cdot )$
denotes the \textquotedblleft distance\textquotedblright\ between a point
and a set, if $z(0)\in \Omega _{1}$, then $z=0$ is an exponentially stable
equilibrium point of (\ref{zdot lemma}). If in addition $z(0)\in \Omega
_{1}\cap \Omega _{2}$, then $F(t)\rightarrow $ Iso$\left( F^{\ast }\right) $
as $t\rightarrow \infty $.
\end{lemma}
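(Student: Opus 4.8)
The plan is to analyze the gradient-descent-type dynamics $\dot z = -\alpha R(p)R^\top(p)z$ by constructing a Lyapunov function and exploiting the rigidity structure, following the route used for Theorem 1 in \cite{CaiAJC}. First I would adopt the natural candidate $V = \tfrac{1}{2}z^\top z = \tfrac12\|z\|^2$, whose time derivative along \eqref{zdot lemma} is $\dot V = -\alpha z^\top R(p)R^\top(p)z = -\alpha\|R^\top(p)z\|^2 \le 0$. This immediately shows $\|z\|$ is nonincreasing, hence $z(t)\in\mathcal L_\infty$ and the trajectory stays in a neighborhood of the origin; it also shows that the equilibrium set is characterized by $R^\top(p)z = 0$. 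The key observation is that $z=0$ corresponds to $F$ being equivalent to $F^\ast$, which by the discussion preceding the lemma means $F\in\mathrm{Iso}(F^\ast)\cup\mathrm{Amb}(F^\ast)$; the role of the set $\Omega_1 = \{z:\Lambda(F,F^\ast)\le\delta\}$ with $\delta$ small is precisely to confine the initial condition to a region where the only equilibrium is $z=0$ (excluding the ambiguous equilibria, which lie at a distance bounded below by the infinitesimal rigidity assumption).

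The crux is to upgrade the nonstrict $\dot V\le 0$ bound into \emph{exponential} stability on $\Omega_1$. Since $F^\ast$ is infinitesimally and minimally rigid, we have $\mathrm{rank}[R(p)] = 2n-3$ with $a = 2n-3$ rows, so on a sufficiently small neighborhood of the target shape $R(p)$ has full row rank and $R(p)R^\top(p)$ is symmetric positive definite there. Consequently there exists a constant $\lambda_{\min}>0$ (the smallest eigenvalue of $R(p)R^\top(p)$, uniformly bounded below on the compact set defined by $\Omega_1$) such that $z^\top R(p)R^\top(p)z \ge \lambda_{\min}\|z\|^2$ for all $p$ with $z\in\Omega_1$. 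This yields
\begin{equation}
\dot V \le -\alpha\lambda_{\min}\|z\|^2 = -2\alpha\lambda_{\min}V,
\end{equation}
from which $\|z(t)\|^2 \le \|z(0)\|^2 e^{-2\alpha\lambda_{\min}t}$ by the comparison lemma, establishing exponential convergence of $z$ to the origin. The main obstacle I anticipate is making the positive-definiteness of $R(p)R^\top(p)$ \emph{uniform} along the trajectory: $R$ depends on $p$, which evolves in time, so one must argue that the trajectory remains within a region where the full-rank/rigidity property persists. This is exactly what the smallness of $\delta$ buys, since infinitesimal rigidity is a generic, open condition and the forward invariance of $\Omega_1$ (from $\dot V\le 0$) keeps $z$ small and hence $p$ close to the target shape.

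Finally, for the second conclusion, I would use the additional restriction $z(0)\in\Omega_1\cap\Omega_2$, where $\Omega_2 = \{z:\mathrm{dist}(p,\mathrm{Iso}(F))<\mathrm{dist}(p,\mathrm{Amb}(F^\ast))\}$ ensures the trajectory starts strictly closer to the correct isometric copy than to any ambiguous configuration. Because $z\to 0$ exponentially and $F=\mathrm{Iso}(F^\ast)$ or $F=\mathrm{Amb}(F^\ast)$ whenever $z=0$, and since the two equilibrium sets are separated by a strictly positive distance under infinitesimal rigidity, the continuity of the flow together with the initial separation condition in $\Omega_2$ precludes convergence to $\mathrm{Amb}(F^\ast)$. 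Therefore $F(t)\to\mathrm{Iso}(F^\ast)$ as $t\to\infty$, which is the desired second claim. As noted in the footnote, the details mirror the proof of Theorem 1 in \cite{CaiAJC}, so I would reference that argument for the distance-separation estimates rather than reproducing them.
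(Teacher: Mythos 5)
Your proposal is correct and follows essentially the same route as the proof the paper defers to (Theorem 1 of \cite{CaiAJC}): the quadratic Lyapunov function $V=\tfrac12\|z\|^2$, the uniform lower bound on the smallest eigenvalue of $R(p)R^{\top}(p)$ over the region delimited by $\Omega_1$ (guaranteed by infinitesimal/minimal rigidity and the forward invariance following from $\dot V\le 0$), and the use of $\Omega_2$ to exclude convergence to the ambiguous frameworks. No gaps; this matches the intended argument.
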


The main result of this section is given by the following theorem.

\begin{theorem}
\label{Theorem1} Let $F^{\ast }$ be infinitesimally and minimally rigid, and
the initial conditions for the distance errors satisfy $z(0)\in \Omega
_{1}\cap \Omega _{2}$. Then, the control law 
\begin{eqnarray}
v_{i} &=&\left\Vert u_{i}\right\Vert \cos \tilde{\theta}_{i}  \label{vi} \\
\omega _{i} &=&-c_{i}\tilde{\theta}_{i}+\dot{\theta}_{id}  \label{wi} \\
u_{i} &=&\left[ 
		\begin{array}{c}
			u_{ix} \\ 
			u_{iy}%
		\end{array}%
		\right]=-k_{a}\sum\limits_{j\in \mathcal{N}%
_{i}(E^{\ast })}\tilde{p}_{ij}z_{ij}+\hat{v}_{fi}  \label{ui} \\
\theta _{id} &=&\left\{ 
\begin{array}{ll}
\text{atan2}(u_{iy},u_{ix}), & \text{if }u_{i}\neq 0 \\ 
0, & \text{if }u_{i}=0,%
\end{array}%
\right.  \label{thetadi}
\end{eqnarray}%
\begin{equation}
\overset{\cdot }{\hat{v}}_{fi}=-\alpha \text{sgn}\left( \sum_{j\in \mathcal{N%
}_{i}(E^{\ast })}(\hat{v}_{fi}-\hat{v}_{fj})\text{ }-b_{i}(\hat{v}%
_{fi}-v_{0})\right)  \label{vf_hat_dot}
\end{equation}%
\newline
where $c_{i},k_{a}>0$ are control gains, 
\begin{equation}
b_{i}=\left\{ 
\begin{array}{ll}
1, & \text{if }i\in V_{0} \\ 
0, & \text{otherwise,}%
\end{array}%
\right.  \label{bi}
\end{equation}%
and $\alpha >\gamma _{0}$ is the observer gain, ensures $(z,\tilde{v}_{fi},%
\tilde{\theta}_{i})=0$ for all $i\in V^{\ast }$ is uniformly asymptotically
stable and that (\ref{control_obj_form}) and (\ref{2nd_Obj_Maintenance})
hold.
\end{theorem}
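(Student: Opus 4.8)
The plan is to exploit the input transformation induced by the control law to recast the closed loop as a cascade, and then invoke the ISS/interconnected-system machinery of Theorems~\ref{Khalil_Thm} and~\ref{Marquez} together with the distance-stability result of Lemma~\ref{Lemma stability}. First I would substitute (\ref{vi})--(\ref{thetadi}) into the position kinematics. Writing $\dot p_i=v_i[\cos\theta_i,\sin\theta_i]^\top$ and using $v_i=\|u_i\|\cos\tilde\theta_i$, $\theta_i=\tilde\theta_i+\theta_{id}$, and $u_i=\|u_i\|[\cos\theta_{id},\sin\theta_{id}]^\top$, a trigonometric expansion gives $\dot p_i=\cos\tilde\theta_i\,M(\tilde\theta_i)u_i=:B(\tilde\theta_i)u_i$, where $M(\tilde\theta_i)$ is the planar rotation by $\tilde\theta_i$ and $B(0)=I_2$; this is the single-integrator-like equation with the heading-dependent multiplicative matrix. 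For the heading I would differentiate (\ref{theta tilda}) and substitute (\ref{wi}), whereupon the $\dot\theta_{id}$ terms cancel and leave $\dot{\tilde\theta}_i=-c_i\tilde\theta_i$, which is exponentially stable and, crucially, decoupled from $z$ and $\tilde v_{fi}$. Since the observer (\ref{vf_hat_dot}) likewise depends only on the estimates $\hat v_{fj}$ and on $v_0$, the pair $(\tilde\theta,\tilde v_f)$ forms an autonomous upper subsystem driving the $z$-dynamics, so the closed loop has the cascade form required by Theorem~\ref{Marquez}.

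Next I would derive the $z$-dynamics. Stacking the agents and using (\ref{R}) gives $\dot z=2R(p)\dot p=2R(p)\mathcal B(\tilde\theta)\big(-k_aR^\top(p)z+\hat v_f\big)$, where $\mathcal B(\tilde\theta)=\mathrm{blkdiag}(B(\tilde\theta_1),\dots,B(\tilde\theta_n))$ and I have used that the stacked acquisition term $-k_a\sum_{j}\tilde p_{ij}z_{ij}$ equals $-k_aR^\top(p)z$. Setting the inputs to zero, i.e.\ $\tilde\theta=0$ (so $\mathcal B=I_{2n}$) and $\hat v_f=1_n\otimes v_0$, and applying Lemma~\ref{LemmaR1n} to annihilate $R(p)(1_n\otimes v_0)$, the unforced dynamics collapse to $\dot z=-2k_aR(p)R^\top(p)z$, which is precisely (\ref{zdot lemma}) with $\alpha=2k_a$; by Lemma~\ref{Lemma stability} its origin is locally (for $z(0)\in\Omega_1$) exponentially, hence asymptotically, stable. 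Working in the reduced $z$-coordinates valid near the infinitesimally rigid $F^\ast$ and regarding $(\tilde\theta,\tilde v_f)$ as the input, Theorem~\ref{Khalil_Thm} then certifies that the $z$-subsystem is locally ISS with respect to $(\tilde\theta,\tilde v_f)$.

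It remains to show the upper subsystem has a uniformly asymptotically stable origin. The heading part is immediate from $\dot{\tilde\theta}_i=-c_i\tilde\theta_i$. For the observer error $\tilde v_f=\hat v_f-1_n\otimes v_0$, I would substitute (\ref{vf_hat_dot}) into $\dot{\tilde v}_{fi}=\dot{\hat v}_{fi}-\dot v_0$ and analyze the resulting variable-structure dynamics with a nonsmooth Lyapunov function of the form $V=\tfrac12\tilde v_f^\top(H\otimes I_2)\tilde v_f$, where $H$ is the graph matrix built from the Laplacian and the leader-indicator $b_i$; infinitesimal (hence connected) rigidity of $F^\ast$ and a nonempty $V_0$ make $H$ positive definite. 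Using $\|\dot v_0\|\le\gamma_0$ and the gain condition $\alpha>\gamma_0$, the $\mathrm{sgn}$ term dominates the drift $\dot v_0$ and drives $\tilde v_f$ to zero in finite time (in the Filippov sense). Consequently $(\tilde\theta,\tilde v_f)=0$ is uniformly asymptotically stable, and Theorem~\ref{Marquez} yields uniform asymptotic stability of $(z,\tilde\theta,\tilde v_f)=0$. Finally I would read off the objectives: $z\to0$ together with $z(0)\in\Omega_2$ gives $F(t)\to\mathrm{Iso}(F^\ast)$ by Lemma~\ref{Lemma stability}, establishing (\ref{control_obj_form}); and since $\dot p_i=B(\tilde\theta_i)u_i\to I_2\cdot(0+v_0)=v_0$ as $z,\tilde\theta,\tilde v_{fi}\to0$, (\ref{2nd_Obj_Maintenance}) follows.

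The step I expect to be the main obstacle is the convergence analysis of the variable-structure observer, since it is nonsmooth: one must argue existence of Filippov solutions, treat $\mathrm{sgn}(0)$ set-valuedly, and verify that $H$ is positive definite so that the Lyapunov derivative is sign definite after the $\alpha>\gamma_0$ domination. A secondary technical point is the well-posedness of $\dot\theta_{id}$ in (\ref{wi}): $\theta_{id}$ is only differentiable where $u_i\neq0$ and inherits the discontinuity of $\dot{\hat v}_{fi}$, so the cancellation leading to $\dot{\tilde\theta}_i=-c_i\tilde\theta_i$ should be justified almost everywhere along solutions rather than pointwise.
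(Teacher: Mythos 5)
Your proposal is correct and follows essentially the same route as the paper's proof: the same input transformation yielding $\dot p_i=B(\tilde\theta_i)u_i$, the same decoupled exponentially stable heading error, the same nonsmooth Lyapunov analysis of the variable-structure observer weighted by the positive definite matrix $L+\mathrm{diag}(b_1,\dots,b_n)$, and the same ISS-plus-cascade argument via Lemma~\ref{Lemma stability} and Theorems~\ref{Khalil_Thm} and~\ref{Marquez}. The only (harmless) deviation is that you claim finite-time convergence of $\tilde v_f$, which the Lyapunov inequality indeed supports but which the paper does not assert, settling for uniform asymptotic stability.
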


\begin{proof}
We first decompose (\ref{dynamic1}) as follows 
\begin{eqnarray}
\dot{p}_{i} &=&\left[ 
\begin{array}{c}
v_{i}\cos \theta _{i} \\ 
v_{i}\sin \theta _{i}%
\end{array}%
\right]  \label{p_dot} \\
\dot{\theta}_{i} &=&\omega _{i}.  \label{theta_dot}
\end{eqnarray}%
Based on (\ref{thetadi}), we can express $u_{i}$ in polar form: 
\begin{equation}
u_{ix}=\left\Vert u_{i}\right\Vert \cos \theta _{id}\quad \text{and}\quad
u_{iy}=\left\Vert u_{i}\right\Vert \sin \theta _{id}.  \label{polar}
\end{equation}%
Substituting (\ref{vi}) and (\ref{polar}) into (\ref{p_dot}) yields 
\begin{equation}
\dot{p}_{i}=\left[ 
\begin{array}{c}
\left\Vert u_{i}\right\Vert {{\cos \tilde{\theta}_{i}\cos (\tilde{\theta}%
_{i}+\theta _{id})}} \\ 
\left\Vert u_{i}\right\Vert \cos \tilde{\theta}_{i}\sin (\tilde{\theta}%
_{i}+\theta _{id})%
\end{array}%
\right]  \label{pdot1}
\end{equation}%
where (\ref{theta tilda}) was used. After using (\ref{polar}) in ( \ref%
{pdot1}), we obtain 
\begin{equation}
\dot{p}_{i}=B(\tilde{\theta}_{i})u_{i}.  \label{pdot2}
\end{equation}%
where 
\begin{equation}
B(\tilde{\theta}_{i})=\left[ 
\begin{array}{cc}
\cos ^{2}\tilde{\theta}_{i} & -\frac{1}{2}\sin 2\tilde{\theta}_{i} \\ 
\frac{1}{2}\sin 2\tilde{\theta}_{i} & \cos ^{2}\tilde{\theta}_{i}%
\end{array}%
\right] .  \label{Gama_i}
\end{equation}

Now, taking the time derivative of (\ref{zij}) gives 
\begin{equation}
\dot{z}_{ij}=\frac{d}{dt}\left( {p}_{ij}^{\intercal }{p}_{ij}\right) =2{p}%
_{ij}^{\intercal }\left[ B(\tilde{\theta}_{i})u_{i}-B(\tilde{\theta}%
_{j})u_{j}\right] ,  \label{zijdot}
\end{equation}%
which can be rewritten in the following vector form 
\begin{equation}
\dot{z}=2R(p)\mathbf{B}(\tilde{\theta})u  \label{zdot Ch2}
\end{equation}%
where (\ref{R}) was used, $\mathbf{B}(\tilde{\theta})=$ diag$\left( B(\tilde{%
\theta}_{1}),...,B(\tilde{\theta}_{n})\right) \in 
\mathbb{R}
^{2n\times 2n}$, $\tilde{\theta}=\left[ \tilde{\theta}_{1},\ldots ,\tilde{%
\theta}_{n}\right] \in 
\mathbb{R}
^{n}$, and $u=\left[ u_{1},\ldots ,u_{n}\right] \in 
\mathbb{R}
^{2n}$. Likewise, (\ref{ui}) can be rewritten as 
\begin{equation}
u=-k_{a}R^{\intercal }(p)z+\hat{v}_{f}  \label{u}
\end{equation}%
where $\hat{v}_{f}=[\hat{v}_{f1},...,\hat{v}_{fn}]$ $\in 
\mathbb{R}
^{2n}$. If $\tilde{v}_{f}=[\tilde{v}_{f1},...,\tilde{v}_{fn}]$ $\in 
\mathbb{R}
^{2n}$, then from (\ref{vftilda}), we have 
\begin{equation}
\tilde{v}_{f}=\hat{v}_{f}-1_{n}\otimes v_{0}.  \label{v_tilda}
\end{equation}%
After substituting (\ref{u}) into (\ref{zdot Ch2}), we get the closed-loop
system 
\begin{equation}
\dot{z}=-2k_{a}R\mathbf{B}(\tilde{\theta})R^{\intercal }z+2R\mathbf{B}(%
\tilde{\theta})(\tilde{v}_{f}+1_{n}\otimes v_{0})  \label{zdot_3}
\end{equation}%
where (\ref{v_tilda}) was used.

Now, we turn our attention to the flocking velocity estimator. As part of
this proof, we will show that (\ref{vf_hat_dot}) guarantees $\tilde{v}%
_{f}(t)\rightarrow 0$ as $t\rightarrow \infty $. First, notice that 
\begin{equation*}
\sum_{j\in N_{i}(E^{\ast })}(\hat{v}_{fi}-\hat{v}_{fj})=\sum_{j=1}^{n}a_{ij}(%
\hat{v}_{fi}-\hat{v}_{fj}),
\end{equation*}%
where $a_{ij}$ are the elements of the adjacency matrix $A$. Taking the time
derivative of (\ref{v_tilda}) and substituting (\ref{vf_hat_dot}) gives 
\begin{eqnarray}
\overset{\cdot }{\tilde{v}}_{f} &=&-\alpha \text{sgn}\left( (L\otimes I_{2})%
\tilde{v}_{f}+(\mathcal{B}\otimes I_{2})\tilde{v}_{f}\right) -1_{n}\otimes 
\dot{v}_{0}  \notag \\
&=&-\alpha \text{sgn}\left( (\mathcal{M}\otimes I_{2})\tilde{v}_{f}\right)
-1_{n}\otimes \dot{v}_{0}  \label{v_tilda_dot}
\end{eqnarray}%
where we used the fact that $\hat{v}_{fi}-\hat{v}_{fj}=\tilde{v}_{fi}-\tilde{%
v}_{fj}$, $\mathcal{B}:=$ diag$(b_{1},...,b_{n})$, $L$ is the Laplacian
matrix, and $\mathcal{M}:=L+\mathcal{B}$. Since the graph of a rigid
framework is always connected, we know that $G^{\ast }$ is connected.
Therefore, we know from Lemma 3 of \cite{Hong} that $\mathcal{M}$ is
positive definite. Since (\ref{v_tilda_dot}) has a discontinuous right-hand
side, its solution needs to be studied using nonsmooth analysis. Given that $%
sgn(\cdot )$ is Lebesgue measurable and essentially locally bounded, one can
show the existence of generalized solutions by embedding the differential
equation into the differential inclusion \cite{Shevitz} 
\begin{equation}
\overset{\cdot }{\tilde{v}}_{f}\in K\left[ f\right] (\tilde{v}_{f},t)
\label{Diff Incl}
\end{equation}%
where $K\left[ \cdot \right] $ is a nonempty, compact, convex, upper
semicontinuous set-valued map and $f(\tilde{v}_{f},t)=-\alpha $sgn$\left(
\left( M\otimes I_{2}\right) \tilde{v}_{f}\right) -\mathbf{1}_{n}\otimes 
\dot{v}_{0}$.

Consider the Lyapunov function candidate 
\begin{equation}
W=\frac{1}{2}\tilde{v}_{f}^{\intercal }(M\otimes I_{2})\tilde{v}_{f}.
\label{W2}
\end{equation}%
Differentiating $W$ along (\ref{Diff Incl}) yields \cite{Shevitz} 
\begin{equation}
\begin{array}{ll}
\dot{W} & \overset{a.e.}{\in }\dfrac{\partial W_{f}}{\partial \tilde{v}}K%
\left[ f\right] (\tilde{v}_{f},t) \\ 
& \subset \tilde{v}_{f}^{\top }\left( \mathcal{M}\otimes I_{2}\right) \left[
-\alpha \text{sgn}\left( \left( \mathcal{M}\otimes I_{2}\right) \tilde{v}%
_{f}\right) -\mathbf{1}_{n}\otimes \dot{v}_{0}\right] 
\end{array}
\label{Wfdot1}
\end{equation}%
where \textit{a.e.} means \textquotedblleft almost
everywhere\textquotedblright . If we define SGN$\left( x\right) :=\left[
SGN\left( x_{1}\right) ,\ldots ,SGN\left( x_{n}\right) \right] $, $\forall
x\in \mathbb{R}^{n}$ where 
\begin{equation}
SGN\left( x_{i}\right) =\left\{ 
\begin{array}{ll}
1 & \text{for }x_{i}>0 \\ 
\left[ -1,1\right]  & \text{for }x_{i}=0 \\ 
-1 & \text{for }x_{i}<0,%
\end{array}%
\right.   \label{SGN}
\end{equation}%
then (\ref{Wfdot1}) becomes \cite{Shevitz} 
\begin{align}
\dot{W}& =-\alpha \tilde{v}_{f}^{\top }\left( \mathcal{M}\otimes
I_{2}\right) \text{SGN }\left( \left( \mathcal{M}\otimes I_{2}\right) \tilde{%
v}_{f}\right)   \notag \\
& -\tilde{v}_{f}^{\top }\left( \mathcal{M}\otimes I_{2}\right) \left( 
\mathbf{1}_{n}\otimes \dot{v}_{0}\right)   \notag \\
& =-\alpha \left\Vert \left( \mathcal{M}\otimes I_{2}\right) \tilde{v}%
_{f}\right\Vert _{1}-\left( \mathbf{1}_{n}\otimes \dot{v}_{0}\right) ^{\top
}\left( \mathcal{M}\otimes I_{2}\right) \tilde{v}_{f}  \notag \\
& =-\alpha \left\Vert \left( \mathcal{M}\otimes I_{2}\right) \tilde{v}%
_{f}\right\Vert _{1}-\dot{v}_{0}^{\top }\sum_{i=1}^{2n}\left[ \left( 
\mathcal{M}\otimes I_{2}\right) \tilde{v}_{f}\right] _{i}  \notag \\
& \leq -\alpha \left\Vert \left( \mathcal{M}\otimes I_{2}\right) \tilde{v}%
_{f}\right\Vert _{1}+\left\Vert \dot{v}_{0}\right\Vert _{\mathcal{L}_{\infty
}}\left\Vert \left( \mathcal{M}\otimes I_{2}\right) \tilde{v}_{f}\right\Vert
_{1}  \notag \\
& \leq -\left( \alpha -\gamma _{0}\right) \left\Vert \left( \mathcal{M}%
\otimes I_{2}\right) \tilde{v}_{f}\right\Vert _{1}  \label{Wfdot2}
\end{align}%
where $\left\Vert \cdot \right\Vert _{1}$ is the vector 1-norm. For $\alpha
>\gamma _{0}$, $\dot{W}$ is negative definite and therefore $\tilde{v}_{f}=0$
is uniformly asymptotically stable \cite{Shevitz}.\ 

Next, after taking the derivative of (\ref{theta tilda}) and substituting (%
\ref{theta_dot}) and (\ref{wi}), we obtain 
\begin{equation}
\overset{\cdot }{\tilde{\theta}}_{i}=-c_{i}\tilde{\theta}_{i},
\label{thetatilda_dot}
\end{equation}%
which indicates that $\tilde{\theta}_{i}=0$, $\forall i\in V^{\ast }$ is
exponentially stable.

Our overall closed-loop system is composed of three interconnected
subsystems---(\ref{zdot_3}), (\ref{v_tilda_dot}), and (\ref{thetatilda_dot}%
)---which are in the form of (\ref{interconn}) with $y=[\tilde{v}_{f},\tilde{%
\theta}]$. First, notice that (\ref{zdot_3}) with $\tilde{v}_{f}=\tilde{%
\theta}=0$ is the same as (\ref{zdot lemma}) upon application of Lemma \ref%
{LemmaR1n}. As a result, $z=0$\ is exponentially stable for $z(0)\in \Omega
_{1}$ by Lemma \ref{Lemma stability} and therefore, (\ref{zdot_3}) is ISS
with respect to $[\tilde{v}_{f},\tilde{\theta}]$ by Theorem \ref{Khalil_Thm}%
. We can now invoke Theorem \ref{Marquez} to claim that $[z,\tilde{v}_{f},%
\tilde{\theta}]=0$ is a uniformly asymptotically stable equilibrium point of
the interconnected system. If we choose $z(0)\in \Omega _{1}\cap \Omega _{2}$%
, then we know $F(t)\rightarrow $ \ Iso$\left( F^{\ast }\right) $\ as $%
t\rightarrow \infty $ from Lemma \ref{Lemma stability}.

Finally, since $z(t)$ is bounded, we know from (\ref{zij}) that $\tilde{p}%
_{ij}$, $\left( i,j\right) \in E^{\ast }$ is bounded. Therefore, since $%
z(t)\rightarrow 0$ as $t\rightarrow \infty $, we know from (\ref{ui}) and ( %
\ref{pdot2}) that $\dot{p}_{i}(t)-v_{0}(t)\rightarrow 0$ as $t\rightarrow
\infty $ for $\forall i\in V^{\ast }$.
\end{proof}

\begin{remark}
Since atan2$\left( 0,0\right) $ is not defined, we used the definition in (%
\ref{thetadi}) for the desired heading angle to account for the case when $%
u_{i}=0$. Note that the form of the control inputs (\ref{vi}) and (\ref{wi})
is the same irrespective of $u_{i}$. When $u_{i}=0$, (\ref{thetatilda_dot})
still holds while $\dot{p}_{i}=0$ from (\ref{pdot2}), indicating that the
motion of the agents remains bounded.
\end{remark}

\begin{remark}
The time derivative of (\ref{thetadi}), which is needed in (\ref{wi}), is
given by 
\begin{equation}
\dot{\theta}_{id}=\left\{ 
\begin{array}{ll}
\dfrac{u_{i}^{\top }H\dot{u}_{i}}{\left\Vert u_{i}\right\Vert ^{2}}, & \text{%
if }u_{i}\neq 0 \\ 
0, & \text{if }u_{i}=0%
\end{array}%
\right.   \label{Thetadot_d}
\end{equation}%
where%
\begin{equation*}
H=\left[ 
\begin{array}{cc}
0 & 1 \\ 
-1 & 0%
\end{array}%
\right] ,
\end{equation*}%
\begin{eqnarray}
\dot{u}_{i} &=&-k_{a}\sum\limits_{j\in \mathcal{N}_{i}(E^{\ast })}\left(
z_{ij}I_{2}+2{p}_{ij}{p}_{ij}^{\intercal }\right) \left[ B(\tilde{\theta}%
_{i})u_{i}-B(\tilde{\theta}_{j})u_{j}\right]   \notag \\
&&+\overset{\cdot }{\hat{v}}_{fi},  \label{ui dot}
\end{eqnarray}%
and (\ref{pdot2}) and (\ref{zijdot}) were used.
\end{remark}

\begin{remark}
\label{Rem: frame}The flocking control law (\ref{vi})-(\ref{thetadi}) is
implementable in each agent's local coordinate frame. To see this, let $^{j}%
\mathcal{R}_{i}\in SO(2)$ be the rotation matrix representing the
orientation of $\left\{ X_{i},Y_{i}\right\} $ with respect to $\left\{
X_{j},Y_{j}\right\} $, and let a left superscript denote the coordinate
frame in which a variable is expressed. First, since $\tilde{\theta}_{i}$ is
measured from the $X_{i}$ axis, then (\ref{theta tilda}) can be calculated
with respect to $\left\{ X_{i},Y_{i}\right\} $. In fact, the calculation of (%
\ref{theta tilda}) does not need $\theta _{i}$ since $^{0}\tilde{\theta}%
_{i}= $ $^{i}\tilde{\theta}_{i}=-^{i}\theta _{id}$ where the calculation of $%
^{i}\theta _{id}$ from (\ref{thetadi}) uses $^{i}u_{i}$. Due to assumptions
A2 and A3, $^{j}\mathcal{R}_{i}$ is known to agent $i$ for all $j\in 
\mathcal{N}_{i}(E^{\ast })$ so (\ref{vf_hat_dot}) can be calculated with
respect to $\left\{ X_{i},Y_{i}\right\} $. The variable $\dot{u}_{i}$ in (%
\ref{ui dot}) is frame invariant (i.e., it has the same form irrespective of
the coordinate frame) since $^{0}\mathcal{R}_{i}^{\intercal }B(\tilde{\theta}%
_{i})\,\,^{0}\mathcal{R}_{i}=B(\tilde{\theta}_{i})$. Moreover, the term $B(%
\tilde{\theta}_{i})u_{i}-B(\tilde{\theta}_{j})u_{j}$ can be computed in $%
\left\{ X_{i},Y_{i}\right\} $ with knowledge of $^{j}\mathcal{R}_{i}$.
Likewise, $\dot{\theta}_{id}$ in (\ref{Thetadot_d}) is frame invariant since 
$^{0}\mathcal{R}_{i}^{\intercal }H\,\,^{0}\mathcal{R}_{i}=H$. The first term
of (\ref{ui}) is the standard formation shape control term which is known to
be frame invariant \cite{Krick}. Since the second term in (\ref{ui}) is the
integral of (\ref{vf_hat_dot}), then $u_{i}$ can be calculated with respect
to $\left\{ X_{i},Y_{i}\right\} $. Finally, (\ref{thetadi}) can be
implemented in $\left\{ X_{i},Y_{i}\right\} $ once $u_{i}$ is specified
relative to $\left\{ X_{i},Y_{i}\right\} $.
\end{remark}

\section{Target Interception Control}

\label{Target}

Let $v_{T}:=\dot{p}_{T}$ and $e_{T}=p_{T}-p_{n}$ be the target interception
error. Since these quantities are unknown to the followers, observers will
be constructed to estimate them. Thus, $\hat{v}_{Ti}$ will denote the target
velocity estimate for agent $i$ and%
\begin{equation}
\tilde{v}_{Ti}=\hat{v}_{Ti}-v_{T}  \label{vTtilda}
\end{equation}%
is the target velocity estimation error. Further, $\hat{e}_{Ti}$ is the
estimate of the target interception error for agent $i$ and 
\begin{equation}
\tilde{e}_{Ti}=\hat{e}_{Ti}-e_{T}  \label{eTtilda}
\end{equation}%
is the corresponding estimation error.

\begin{theorem}
For initial conditions $z(0)\in \Omega _{1}\cap \Omega _{2}$, the control
law composed of (\ref{vi}), (\ref{wi}), (\ref{thetadi}),%
\begin{equation}
u_{i}=\left\{ 
\begin{array}{l}
-k_{a}\sum\limits_{j\in \mathcal{N}_{i}(E^{\ast })}p_{ij}z_{ij}+k_{T}\hat{e}%
_{Ti}+\hat{v}_{Ti}, \\ 
\quad \text{if }i=1,...,n-1 \\ 
k_{T}e_{T}+v_{T},\text{ if }i=n%
\end{array}%
\right.  \label{u_target}
\end{equation}%
\begin{equation}
\overset{\cdot }{\hat{v}}_{Ti}=-\alpha _{1}sgn\left( \sum_{j\in \mathcal{N}%
_{i}(E^{\ast })}(\hat{v}_{Ti}-\hat{v}_{Tj})+b_{i}(\hat{v}_{Tn}-v_{T})\right)
\label{vT_hat_target}
\end{equation}%
\begin{equation}
\overset{\cdot }{\hat{e}}_{Ti}=-\alpha _{2}sgn\left( \sum_{j\in \mathcal{N}%
_{i}(E^{\ast })}(\hat{e}_{Ti}-\hat{e}_{Tj})+b_{i}(\hat{e}_{Tn}-e_{T})\right)
\label{eT_hat_dot}
\end{equation}%
\newline
where $k_{a},k_{T}>0$ are control gains, $\alpha _{1}>\gamma _{T1}$ and $%
\alpha _{2}>\gamma _{T2}$ are observer gains, $\left\Vert \dot{e}%
_{T}(t)\right\Vert _{\mathcal{L}_{\infty }}\leq \gamma _{T2}$, $\hat{v}%
_{Tn}(0)=v_{T}(0)$, and 
\begin{equation}
b_{i}=\left\{ 
\begin{array}{ll}
1, & \text{if }i=n \\ 
0, & \text{otherwise,}%
\end{array}%
\right.  \label{bi TI}
\end{equation}%
renders $[z,\tilde{\theta}_{i},e_{T}]=0$ for all $i\in V^{\ast }$ uniformly
asymptotically stable, and ensures that (\ref{control_obj_form}) and (\ref%
{2nd_obj_target}) are satisfied.
\end{theorem}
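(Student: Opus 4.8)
The plan is to mirror the proof of Theorem~\ref{Theorem1}, viewing the closed loop as a cascade in the state $[z,\tilde\theta,e_T,\tilde v_T,\tilde e_T]$ and invoking Theorem~\ref{Khalil_Thm} and Theorem~\ref{Marquez}. Since the inputs (\ref{vi}), (\ref{wi}), (\ref{thetadi}) are unchanged, the decomposition (\ref{pdot2}) with $B(\tilde\theta_i)$ as in (\ref{Gama_i}) carries over verbatim, and so does the heading-error equation (\ref{thetatilda_dot}), $\dot{\tilde\theta}_i=-c_i\tilde\theta_i$. Hence each $\tilde\theta_i\to0$ exponentially and this subsystem is autonomous and sits at the top of the cascade.

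Next I would treat the two observer subsystems. Writing $\tilde v_{Ti}=\hat v_{Ti}-v_T$ and $\tilde e_{Ti}=\hat e_{Ti}-e_T$, the identities $\hat v_{Ti}-\hat v_{Tj}=\tilde v_{Ti}-\tilde v_{Tj}$ and $\hat v_{Tn}-v_T=\tilde v_{Tn}$ (and likewise for $e_T$) let me stack (\ref{vT_hat_target}) and (\ref{eT_hat_dot}). The key point is that because $b_i=1$ only for $i=n$, the grounding term contributes exactly $(\mathcal B\otimes I_2)\tilde v_T$, so the signum argument becomes $(\mathcal M\otimes I_2)\tilde v_T$ with $\mathcal M=L+\mathcal B$, which is the identical form to (\ref{v_tilda_dot}). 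Since $G^\ast$ is connected and $b_n=1$, Lemma~3 of \cite{Hong} gives $\mathcal M\succ0$, so the nonsmooth Lyapunov argument (\ref{W2})--(\ref{Wfdot2}) applies unchanged and yields $\tilde v_T\to0$ for $\alpha_1>\gamma_{T1}$ and, analogously, $\tilde e_T\to0$ for $\alpha_2>\gamma_{T2}$, provided $\|\dot e_T\|_{\mathcal L_\infty}\le\gamma_{T2}$.

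I would then derive the remaining two subsystems. For the leader, $\dot e_T=v_T-B(\tilde\theta_n)(k_Te_T+v_T)$, which reduces to $\dot e_T=-k_Te_T$ when $\tilde\theta_n=0$; thus $e_T=0$ is exponentially stable and the subsystem is ISS with respect to $\tilde\theta_n$ by Theorem~\ref{Khalil_Thm}, so $e_T\to0$ and $\dot e_T$ stays bounded (closing the $\gamma_{T2}$ hypothesis without circularity, since this subsystem depends on neither $\tilde e_T,\tilde v_T$ nor $z$). For the distance errors I would substitute (\ref{u_target}) into (\ref{zdot Ch2}); the crucial structural fact is that the follower and leader inputs share the common velocity $k_Te_T+v_T$, so $u=-k_a\Pi R^\top z+1_n\otimes(k_Te_T+v_T)+\Pi(k_T\tilde e_T+\tilde v_T)$, where $\Pi=\mathrm{diag}(I_2,\dots,I_2,0)$ deletes the leader block. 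By Lemma~\ref{LemmaR1n} the common-velocity term is annihilated whenever $\tilde\theta=0$, so with all inputs zeroed the unforced dynamics collapse to $\dot z=-2k_aR\Pi R^\top z$.

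The hard part will be showing this unforced system is still exponentially stable at $z=0$ despite the leader carrying no gradient term, since Lemma~\ref{Lemma stability} is stated for $RR^\top$ rather than $R\Pi R^\top$. My argument is that $z^\top R\Pi R^\top z=\|\Pi R^\top z\|^2$ vanishes only when $R^\top z=e_n\otimes w$ for some $w\in\mathbb R^2$; but by Lemma~\ref{LemmaR1n} every element of $\mathrm{Im}(R^\top)$ is orthogonal to the translations $1_n\otimes e_1$ and $1_n\otimes e_2$, forcing $w=0$, and since $R$ has full row rank near $z=0$ (infinitesimal rigidity) $R^\top$ is injective, so $z=0$. Thus $R\Pi R^\top\succ0$ locally and the Lyapunov estimate of Lemma~\ref{Lemma stability} gives exponential stability, making the $z$-subsystem ISS with respect to $[\tilde\theta,\tilde v_T,\tilde e_T]$, with $e_T$ entering only through the factor $B(\tilde\theta)-I$ that vanishes as $\tilde\theta\to0$. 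Applying Theorem~\ref{Marquez} successively---$\tilde\theta$ and $\tilde v_T$ autonomous, then $e_T$ driven by $\tilde\theta_n$, then $\tilde e_T$ driven by $\dot e_T$, then $z$ driven by the rest---yields uniform asymptotic stability of $[z,\tilde\theta,e_T]=0$. Finally, $z\to0$ with $z(0)\in\Omega_1\cap\Omega_2$ gives $F\to\mathrm{Iso}(F^\ast)$, i.e.\ (\ref{control_obj_form}); and since isometries preserve convex combinations, $p_n^\ast\in\mathrm{conv}\{p_1^\ast,\dots,p_{n-1}^\ast\}$ implies $p_n$ approaches $\mathrm{conv}\{p_1,\dots,p_{n-1}\}$, whence $e_T\to0$ establishes (\ref{2nd_obj_target}).
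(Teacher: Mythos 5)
Your proposal is correct and follows essentially the same route as the paper's own proof: the same subsystem decomposition ($\tilde{\theta}$, the two variable-structure observers, the leader error $\dot{e}_{T}=v_{T}-B(\tilde{\theta}_{n})(k_{T}e_{T}+v_{T})$, and the $z$-dynamics driven by the remaining states), the same nonsmooth Lyapunov argument with $\mathcal{M}=L+\mathcal{B}$, and the same ISS/cascade application of Theorems~\ref{Khalil_Thm} and~\ref{Marquez}. The one point worth noting is that where the paper merely asserts that zeroing the leader's two columns of $R(p)$ does not reduce its rank, you actually prove it via Lemma~\ref{LemmaR1n} (orthogonality of $\mathrm{Im}(R^{\intercal})$ to the translations forces $w=0$ in $R^{\intercal}z=e_{n}\otimes w$), which closes a small gap in the published argument.
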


\begin{proof}
First, as in the proof of Theorem \ref{Theorem1}, we can show that (\ref{wi}%
) and (\ref{vT_hat_target}) ensure $\tilde{\theta}_{i}=0$, $\forall i\in
V^{\ast }$ is an exponentially stable and $\tilde{v}_{T}=0$ is uniformly
asymptotically stable for $\alpha _{1}>\gamma _{T1}$ where $\tilde{v}_{T}=[%
\tilde{v}_{T1},...,\tilde{v}_{Tn}]$ $\in 
\mathbb{R}
^{2n}$.

The dynamics of the target interception error is given by 
\begin{equation}
\dot{e}_{T}=v_{T}-B(\tilde{\theta}_{n})(v_{T}+k_{T}e_{T})  \label{e_T_dot}
\end{equation}%
upon use of (\ref{pdot2}) and (\ref{u_target}) for $i=n$. Notice that (\ref%
{e_T_dot}) is ISS with respect to input $\tilde{\theta}_{n}$ since the
unforced system is given by $\dot{e}_{T}=-k_{T}e_{T}$. Therefore, by Theorem %
\ref{Marquez}, the interconnection of (\ref{e_T_dot}) and (\ref%
{thetatilda_dot}) has a uniformly asymptotically stable equilibrium at $%
[e_{T},\tilde{\theta}_{n}]=0$.

Since (\ref{eT_hat_dot}) and (\ref{vT_hat_target}) have a similar structure,
the dynamics of the target interception estimation error can be calculated
as 
\begin{equation}
\overset{\cdot }{\tilde{e}}_{T}=-\alpha _{2}\text{sgn}((\mathcal{M}\otimes
I_{2})\tilde{e}_{T})-1_{n}\otimes \dot{e}_{T}  \label{eT_tilda_dot}
\end{equation}%
where (\ref{eTtilda}) was used, and $\tilde{e}_{T}=[\tilde{e}_{T1},\ldots 
\tilde{e}_{Tn}]\in 
\mathbb{R}
^{2n}$. Given that $e_{T}(t),\tilde{\theta}_{n}(t),v_{T}(t)\in \mathcal{L}%
_{\infty }$, we know from (\ref{e_T_dot}) that $\dot{e}_{T}(t)\in \mathcal{L}%
_{\infty }$. Therefore, we know a bounding constant $\gamma _{T2}$ exists
such that $\left\Vert \dot{e}_{T}(t)\right\Vert _{\mathcal{L}_{\infty }}\leq
\gamma _{T2}$. It then follows from (\ref{eT_tilda_dot}) that $\tilde{e}%
_{T}=0$ is uniformly asymptotically stable when $\alpha _{2}>\gamma _{T2}$.

From the asymptotic stability of the equilibrium $\tilde{v}_{T}=0$ and the
initial condition $\hat{v}_{Tn}(0)=v_{T}(0)$, we have that $\hat{v}%
_{Tn}(t)=v_{T}(t)$, $\forall t\geq 0$. With this in mind, we can rewrite (%
\ref{u_target}) in the following stack form 
\begin{equation}
u=-k_{a}R_{0}^{\intercal }(p)z+\tilde{v}_{T}+k_{T}\tilde{e}_{T}+1_{n}\otimes
(v_{T}+k_{T}e_{T})  \label{u_target_stack}
\end{equation}%
where $R_{0}(p)$ is the rigidity matrix with the last two columns (which
correspond to agent $n$) replaced by zeros. Substituting (\ref%
{u_target_stack}) into (\ref{zdot Ch2}) gives 
\begin{equation}
\begin{split}
\dot{z}& =-2k_{a}R(p)\mathbf{B}(\tilde{\theta})R_{0}^{\intercal }(p)z+2R(p)%
\mathbf{B}(\tilde{\theta})[\tilde{v}_{T}+ \\
& \mathbf{1}_{n}\otimes v_{T}+k_{T}(\tilde{e}_{T}+1_{n}\otimes e_{T})].
\end{split}
\label{zdot_target}
\end{equation}%
\newline

Now, consider the interconnection of (\ref{zdot_target}), (\ref%
{thetatilda_dot}), (\ref{v_tilda_dot}), (\ref{e_T_dot}), and (\ref%
{eT_tilda_dot}). We will check if (\ref{zdot_target}) is ISS with respect to
input $(\tilde{\theta},\tilde{v}_{T},e_{T},\tilde{e}_{T})$. The unforced
system is given by 
\begin{equation}
\dot{z}=-2k_{a}R(p)R_{0}^{\intercal }(p)z.  \label{zdot_target_2}
\end{equation}%
It is not difficult to show that $R(p)R_{0}^{\intercal
}(p)=R_{0}(p)R_{0}^{\intercal }(p)$. Moreover, due to the structure of $R(p)$
, setting its last two columns to zero does not affect its rank. Therefore, $%
R_{0}(p)$ also has full row rank and Lemma \ref{Lemma stability} can be
invoked to conclude that $z=0$ is an exponentially stable equilibrium of (%
\ref{zdot_target_2}) when $z(0)\in \Omega _{1}$. It then follows from
Theorem \ref{Marquez} that $[z,\tilde{\theta},\tilde{v}_{T},e_{T},\tilde{e}%
_{T}]=0$ is a uniformly asymptotically stable equilibrium for the
interconnected system. If $z(0)\in \Omega _{1}\cap \Omega _{2}$, then we
know from Lemma \ref{Lemma stability} that $F(t)\rightarrow $\ Iso$\left(
F^{\ast }\right) $ \ as $t\rightarrow \infty $. Due to the manner in which $%
F^{\ast }$ is constructed, this implies that $p_{n}(t)\in $ conv$\{p_{1}(t),$
$\ldots ,$ $p_{n-1}(t)\}$ as $t\rightarrow \infty $. Since we have proven
that $e_{T}(t)\rightarrow 0$ as $t\rightarrow \infty $, then (\ref%
{2nd_obj_target}) holds.
\end{proof}

\begin{remark}
The target interception control law is also implementable in each agent's
local coordinate frame. One can show this by using the same arguments
outlined in Remark \ref{Rem: frame}.
\end{remark}

\section{Experimental Results}

\label{EXP}

The controllers from Sections \ref{KLC} and \ref{Target} were experimentally
tested on the \textit{Robotarium} system \cite{Pickem2}. This is a swarm
robotics testbed located at the Georgia Institute of Technology that uses
the GRITSBot as the mobile robot platform \cite{Pickem1}. The GRITSBot is a
low-cost, wheeled robot equipped with wireless communication, battery, and
processing boards, and has a footprint of approximately $3\times 3$ cm$^{2}$%
. The MATLAB codes used to implement the controllers are available at
github.com/milad-khaledyan/flocking\_target\_intercep\_codes.git. Due to
page limitations, only the flocking control experiment is presented below.
However, a video of the target interception experiment can be seen at
www.youtube.com/watch?v=HscvM7OtLVQ.

The experiment for the flocking controller (\ref{vi})-(\ref{thetadi}) was
conducted with five robots. The desired formation $F^{\ast }$ was set to a
regular pentagon, which was made infinitesimally and minimally rigid by
introducing seven edges such that $E^{\ast
}=\{(1,2),(1,3),(1,4),(1,5),(2,3),(3,4),(4,5)\}$. The desired distances
between all robots were given by $d_{12}=$ $d_{23}=d_{34}=d_{45}=d_{15}=0.1%
\sqrt{2(1-\cos \frac{2\pi }{5})}$ m and $d_{13}=d_{14}=0.1\sqrt{2(1+\cos 
\frac{\pi }{5})}$ m. The formation was required to move as a virtual rigid
body around a circle. To this end, the desired translational maneuvering
velocity was chosen as $v_{0}(t)=\left[ -r\omega _{0}\sin \omega _{0}t\text{
, }r\omega _{0}\cos \omega _{0}t\right] $ m/s where $r=0.15$ m is the radius
for the circular trajectory and $\omega _{0}=0.3$ rad/s. Figure \ref%
{desired_conf} depicts the desired formation and desired maneuver. Only
robot 1 had access to $v_{0}$. \begin{figure}[!t]
\centering
\includegraphics[keepaspectratio,width=3in,height=3in,trim={0.3in 0.8in 0.5in 1.1in},clip]{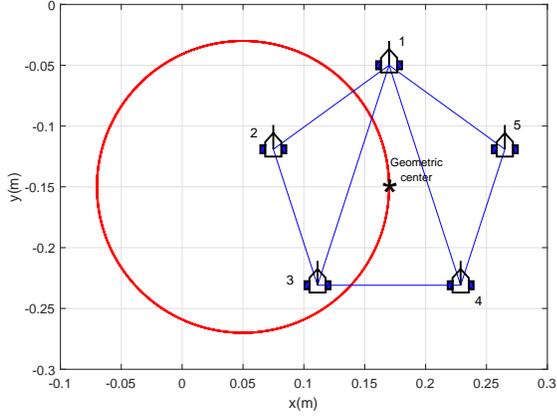}
\caption{Desired pentagon formation along with desired circular trajectory
for the geometric center.}
\label{desired_conf}
\end{figure}

The initial positions and orientations of the robots were randomly selected
while $\hat{v}_{fi}(0)=0$ for $i=1,...,5$. The control gains in (\ref{wi})
and (\ref{ui}) were set to $c_{i}=10$, $i=1,...,5$, and $k_{a}=6$. The
observer gain in (\ref{vf_hat_dot}) was set to $\alpha =0.05$ which
satisfies the constraint $\alpha >\gamma _{0}$ where $\gamma
_{0}=0.045=r\omega _{0}=\left\Vert \dot{v}_{0}(t)\right\Vert _{\mathcal{L}%
_{\infty }}$.

The path of the geometric center of the formation as it maneuvered around
the circle is shown in Figure \ref{paths}. This figure also shows that the
desired formation was successfully acquired from the random initial
configuration. Figure \ref{errors} shows the inter-agent distance errors,
heading angle errors, and flocking velocity estimation errors quickly
converging to approximately zero. The errors are not exactly zero due to
measurement noise and the camera resolution. We can observe from the errors
that the desired formation is acquired after approximately $15$ s. A video
of the experiment can be seen at www.youtube.com/watch?v=nujX1QsVUJI.
\begin{figure}[t]
\centering
\includegraphics[keepaspectratio,width=3in,height=4in, trim={1.5in 0 0.5in
0.5in},clip]{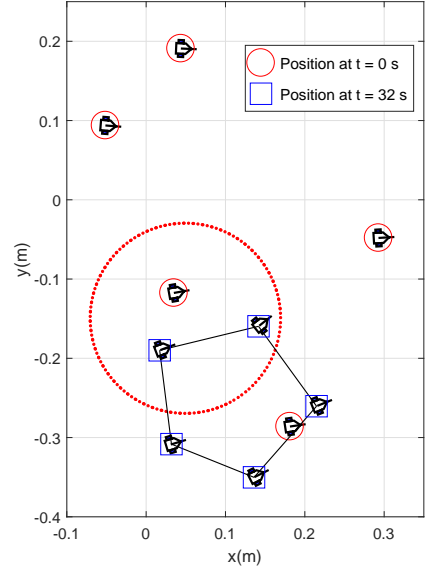}
\caption{Circular maneuver of the geometric center of the formation along
with snapshots of the formation at $t=0$ and $32$ s.}
\label{paths}
\end{figure}
\begin{figure}[t]
\centering
\includegraphics[keepaspectratio,width=3.8in,height=3.076in, trim={0.5in
1.3in 0 0.5in},clip]{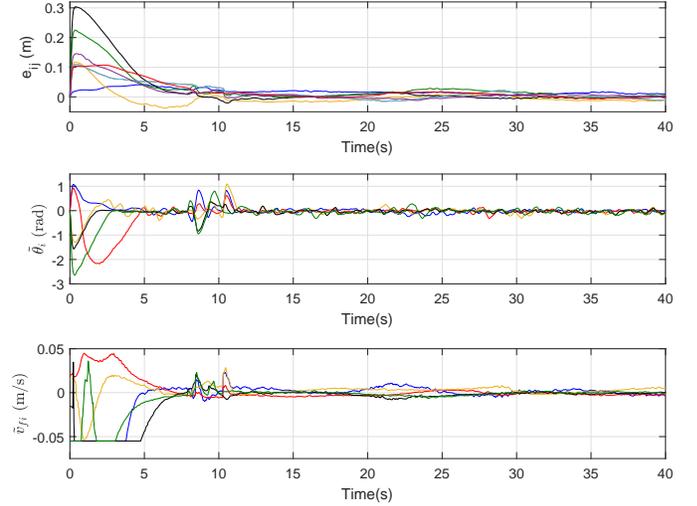}
\caption{Distance errors, $e_{ij}(t)$, $(i,j)\in E^{\ast }$ (top), heading
angle errors, $\tilde{\protect\theta}_{i}(t)$, $i=1,...,5$ (middle), and
flocking velocity estimation errors, $\tilde{v}_{fi}$, $i=1,...,5$ (bottom).}
\label{errors}
\end{figure}

\section{Conclusion}

\label{Conclusion}This paper showed how the distance-based framework can be
applied to nonholonomic kinematic agents to stabilize the inter-robot
distances to desired values while allowing the formation to flock or track
and surround a moving target. The control laws have three main components:
i) an input transformation, ii) the standard gradient descent law for
formation acquisition, and iii) distributed, variable structure observers to
estimate the flocking or target signals not available to certain agents via
their neighbors. The stability analyses showed that the proposed controls
ensure the asymptotic stability of the origin of the error systems.
Experimental results successfully validated the proposed formation control
algorithms.


%


\ifCLASSOPTIONcaptionsoff
\newpage \fi



%

\end{document}